\theoremstyle{plain}
\newtheorem{theorem}{Theorem}[section]
\newtheorem{proposition}[theorem]{Proposition}
\theoremstyle{definition}
\newtheorem{assumption}[theorem]{Assumption}
\theoremstyle{remark}
\newtheorem{remark}[theorem]{Remark}
\icmltitlerunning{Causality-Inspired Safe Residual Correction for Multivariate Time Series}
\begin{document}

\twocolumn[
\icmltitle{Causality-Inspired Safe Residual Correction for Multivariate Time Series}

\icmlsetsymbol{equal}{*}

\begin{icmlauthorlist}
\icmlauthor{Jianxiang Xie}{unsw}
\icmlauthor{Yuncheng Hua}{unsw}
\icmlauthor{Mingyue Cheng}{ustc}
\icmlauthor{Flora Salim}{unsw}
\icmlauthor{Hao Xue}{hkustgz,unsw}
\end{icmlauthorlist}

\icmlaffiliation{unsw}{School of Computer Science and Engineering, University of New South Wales, Sydney, Australia}
\icmlaffiliation{ustc}{State Key Laboratory of Cognitive Intelligence, School of Computer Science and Technology, University of Science and Technology of China, Hefei, China}
\icmlaffiliation{hkustgz}{The Hong Kong University of Science and Technology (Guangzhou)}
\icmlcorrespondingauthor{Hao Xue}{haoxue@hkust-gz.edu.cn}

\icmlkeywords{Time Series Forecasting, Causal Inference, Residual Correction, Deep Learning, Robustness}

\vskip 0.3in
]

\printAffiliationsAndNotice

\begin{abstract}
While modern multivariate forecasters such as Transformers and GNNs achieve strong benchmark performance, they often suffer from \emph{systematic errors} at specific variables or horizons and, critically, lack guarantees against performance degradation in deployment. Existing post-hoc residual correction methods attempt to fix these errors, but are inherently greedy: although they may improve average accuracy, they can also ``help in the wrong way'' by overcorrecting reliable predictions and causing local failures in unseen scenarios.

To address this critical ``safety gap,'' we propose CRC (\emph{Causality-inspired Safe Residual Correction}), a plug-and-play framework explicitly designed to ensure \emph{non-degradation}. CRC follows a divide-and-conquer philosophy: it employs a \emph{causality-inspired encoder} to expose direction-aware structure by decoupling self- and cross-variable dynamics, and a \emph{hybrid corrector} to model residual errors. Crucially, the correction process is governed by a strict \emph{four-fold safety mechanism} that prevents harmful updates.
Experiments across multiple datasets and forecasting backbones show that CRC consistently improves accuracy, while an \textbf{in-depth ablation study} confirms that its core safety mechanisms \textbf{ensure} exceptionally high non-degradation rates (NDR), making CRC a correction framework suited for safe and reliable deployment.
\end{abstract}

\section{Introduction}

Multivariate time series forecasting is a central component of decision-making in transportation, energy, finance, and urban systems. Recent advances in deep forecasting---including attention-based models such as Informer \cite{informer}, FEDformer \cite{fedformer}, and Autoformer \cite{autoformer}, 2D variation models such as TimesNet \cite{timesnet}, channel-independent architectures such as PatchTST \cite{patchtst}, and decomposition-based frameworks \cite{dlinear}---have pushed benchmark performance to new heights. However, their ``on-paper'' average accuracy often masks a critical deployment challenge: these models can still fail \emph{catastrophically} for specific horizons, sensors, or nodes \cite{m4competition, lim2021survey}. In safety-critical or high-stakes applications, such horizon- or node-specific errors undermine reliability, robustness, and trustworthiness.

To mitigate these failures, the literature has long explored \emph{residual correction} as a means to refine predictions. Classical approaches, including boosting \cite{boosting} and hybrid statistical–neural models \cite{zhang2003hybrid}, treat residuals as exploitable structure. More recent post-hoc refinement models aim to enhance deep forecasting architectures by explicitly modeling residual patterns \cite{residualcorrection, nbeats, wen2022tsf-survey}. Unfortunately, these methods exhibit a fundamental weakness: they are \emph{greedy}. By optimizing for average error reduction, they often ``help in the wrong way''---overfitting to noisy residuals, distorting well-predicted points, and ultimately \emph{degrading} performance where reliability matters most. This creates the \textbf{Corrector's Dilemma}: how can a corrector capture \emph{correctable systematic bias} without amplifying \emph{uncorrectable random noise}?

In light of these limitations, we propose \textbf{Causality-inspired Safe Residual Correction (CRC)}, a plug-and-play framework designed from the ground up to \emph{guarantee non-degradation}. Our central philosophy is that \emph{safety cannot be an afterthought; it must be engineered into the correction process}. CRC resolves the Corrector's Dilemma through a ``defense-in-depth'' architecture.

First, to isolate correctable systematic structure, we introduce a \emph{causality-inspired encoder} that explicitly disentangles self- and cross-node dynamics. Motivated by causal directionality principles \cite{granger1969, srinivas2013granger}, our encoder learns direction-sensitive interactions through dynamic gating, producing structured representations that preserve directional influence and attenuate spurious correlations.

Second, we avoid reliance on a single ``black-box'' corrector. CRC adopts a \emph{hybrid corrector} that decomposes the correction $\Delta$ into a conservative, interpretable \emph{linear floor} based on Ridge Regression \cite{ridge1970} and a lightweight, expressive \emph{nonlinear delta} modeled by an MLP. The linear component offers stability and interpretability, while the nonlinear component captures higher-order residual structure.

Most critically, the nonlinear ``delta'' is constrained by \textbf{four explicit safety mechanisms}: direction gating, quantile clipping, point-wise selection, and shrink-to-base blending. These mechanisms collectively serve as a \emph{safety firewall} that prevents harmful corrections and ensures that no update is applied unless it is demonstrably safe, formalizing non-degradation at both point-wise and validation levels.

\paragraph{Contributions.} \begin{itemize} \item \textbf{Causality-Aware Encoder.} A direction-sensitive encoder that separates self- and cross-node dynamics, providing structured per-node representations for systematic residual correction. \item \textbf{Hybrid Corrector with Explicit Safety.} A ridge `floor'' plus a lightweight MLP delta'', governed by direction gating, quantile clipping, point-wise selection, and shrink-to-base blending. \item \textbf{Empirical Validation and Interpretability.} Comprehensive results across datasets and backbones demonstrate consistent accuracy improvements and exceptionally high non-degradation rates (NDR), with interpretability through directional priors and ridge coefficients. \end{itemize}
\section{Related Work}
\label{sec:related}

\subsection{Forecasting and Residual Correction Models}
\label{sec:related_models}

Classical statistical models (e.g., ARIMA) provide strong biases but struggle with complex, high-dimensional data \cite{arima-ets}. Recent deep models, from RNNs \cite{lstm-tsf} to Transformers \cite{informer} and decomposition architectures \cite{dlinear, timesnet}, have achieved state-of-the-art accuracy on benchmarks. 
Independently, residual correction has a long history, from boosting \cite{boosting} to hybrid statistical-neural models \cite{zhang2003hybrid} and post-hoc refinement modules \cite{residualcorrection}.

\textbf{However, these two lines of work share a fundamental flaw: they are \emph{greedy}.} They are designed to optimize for \emph{average} error \cite{timesnet, residualcorrection}. This focus on average gain provides no mechanism to prevent, and often encourages, \emph{harmful updates} that degrade performance on specific horizons or nodes. They are, by design, \emph{unsafe}.

\subsection{Safe and Causal Learning}
\label{sec:related_methods}

A separate body of work has studied robustness from two main angles. \textbf{Safe Learning} aims to avoid harmful updates using tools like conformal prediction \cite{conformal} (for intervals), selective prediction \cite{selectivenet} (for abstention), or robust objectives \cite{dro}. \textbf{Causal Learning} aims to improve robustness under shifts by discovering and integrating causal structure \cite{granger1969, pcmci, causal-representation} into models, often via graph neural networks \cite{brouillard2020}.

\textbf{However, neither of these domains provides a complete solution for our goal.} ``Safe Learning'' is \emph{passive}; it focuses on uncertainty-aware abstention \cite{selectivenet, conformal} rather than \emph{active correction}. ``Causal Learning'' is \emph{descriptive}; it focuses on discovering a graph structure \cite{causal-representation, brouillard2020} but does not, by itself, provide a framework for post-hoc correction that is guaranteed to be safe.

\subsection{Our Contribution}
\label{sec:related_summary}

In contrast to all prior work, CRC is the first framework designed not for \emph{average} gain, but for \emph{guaranteed} safety. It uniquely unifies three directions: (i) it leverages \emph{causality-aware representations} to identify \emph{what} to correct; (ii) it uses a \emph{hybrid linear/nonlinear} corrector to model this correction; and (iii) it enforces \emph{explicit safety mechanisms} to control \emph{how} to correct without degradation.

\section{Method}
\label{sec:method}

We present \emph{Causality-aware Residual Correction} (CRC), a plug-and-play framework for \emph{safe post-hoc forecasting correction}. 
Unlike conventional residual refinement methods that optimize average accuracy and may degrade reliable predictions, CRC is explicitly designed to \emph{guarantee non-degradation} under deployment.

CRC takes as input a historical sequence
$W \in \mathbb{R}^{B \times P \times N}$, where $B$ is batch size, $P$ the look-back window, and $N$ the number of variables, together with a baseline forecast
$\hat{Y}^{base} \in \mathbb{R}^{B \times H \times N}$ for horizon $H$.
The corrected forecast is formulated as
\begin{equation}
\hat{Y} = \hat{Y}^{base} + \Delta,
\end{equation}
where $\Delta$ is a structured and safety-constrained correction term.
Our objective is \emph{not} to maximize accuracy aggressively, but to design $\Delta$ such that
\begin{equation}
\text{Error}(\hat{Y}) \le \text{Error}(\hat{Y}^{base}),
\end{equation}
thereby ensuring explicit non-degradation.

\paragraph{Overview.}
CRC follows a \emph{safe-by-design} pipeline (Fig.~\ref{fig:architecture}) consisting of three stages:
(1) exposing \emph{correctable structure} in residuals via a causality-inspired encoder,
(2) performing conservative hybrid correction within this structured space, and
(3) enforcing a defense-in-depth safety firewall to prevent harmful updates.

\begin{figure*}[t!]
\centering
\includegraphics[width=\linewidth,trim={1.8cm 0 1.8cm 0},clip]{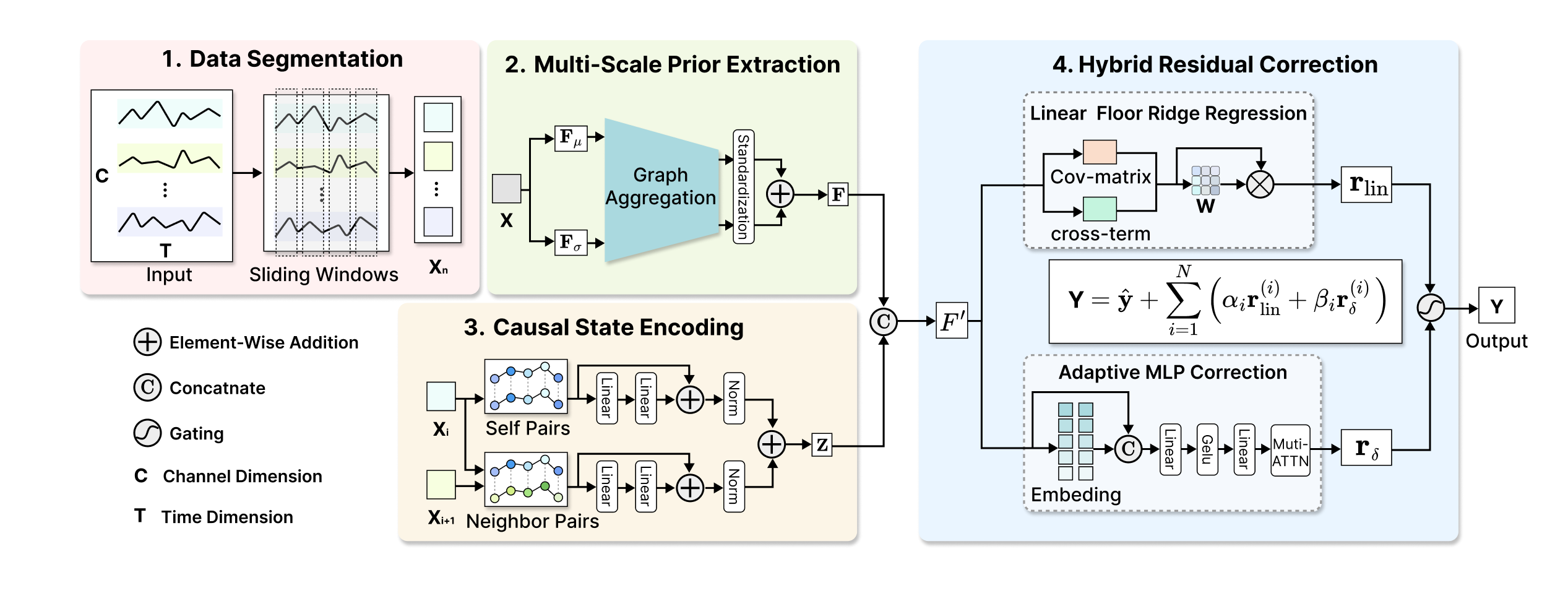}
\caption{Overview of the Causality-aware Residual Correction (CRC) framework.
CRC exposes structured residual representations via a causality-inspired encoder, applies hybrid linear--nonlinear correction, and enforces four explicit safety mechanisms to guarantee non-degradation.}
\label{fig:architecture}
\end{figure*}

% -------------------------------------------------
\subsection{Structured Residual Representation via a Causality-Inspired Encoder}
\label{sec:encoder}

\textbf{Motivation.}
Directly correcting the raw residual $R = Y - \hat{Y}^{base}$ is unreliable, as it entangles \emph{systematic, correctable bias} with \emph{irreducible random noise}.
Naive correctors that ignore this distinction often ``help in the wrong way,'' degrading already reliable predictions.

\textbf{Key idea.}
Rather than performing causal discovery, we design a \emph{causality-inspired encoder} whose role is to \emph{expose direction-aware structure} that renders systematic residual components \emph{recoverable} by downstream correction.
This structured representation serves as an inductive bias for safe residual modeling.

\paragraph{Directional Encoding with Pairwise Interaction.}
Given a prior adjacency matrix $A \in \{0,1\}^{N \times N}$ (where $A_{ij}=1$ denotes a potential influence from $j$ to $i$),
the encoder separates self- and cross-node dynamics using a shared \emph{pairwise interaction module} $f_{\text{pair}}$ (Fig.~\ref{fig:architecture}, left),
implemented via residual blocks (STCL-Core).

\subsubsection{Causal State Encoding}

\paragraph{Self-Dynamics.}
For each node $i$, the encoder first processes its own history to capture intrinsic temporal dynamics:
\begin{equation}
(R_i, \alpha_{i \to i}) = f_{\text{pair}}(W_{:,:,i}, W_{:,:,i}),
\end{equation}
where $R_i \in \mathbb{R}^{B \times d}$ is a self-representation and $\alpha_{i \to i}$ is a self-gating coefficient.

\paragraph{Cross-Node Dynamics.}
For each potential source node $j$ such that $A_{ij}=1$, the same module processes the ordered pair $(W_i, W_j)$:
\begin{equation}
(R_j, \alpha_{j \to i}) = f_{\text{pair}}(W_{:,:,i}, W_{:,:,j}).
\label{eq:cross_dyn}
\end{equation}
Internally, $f_{\text{pair}}$ computes a $2 \times 2$ interaction matrix.
The directed gate $\alpha_{j \to i}$ is \emph{strictly extracted from the off-diagonal term} corresponding to $j \!\to\! i$ influence and passed through a $\tanh$ activation.
This explicit isolation enables direction-sensitive representation without assuming causal identifiability.

\paragraph{Gated Aggregation.}
The final representation for node $i$ is obtained via gated aggregation:
\begin{equation}
Z_i =
\frac{1}{|\mathcal{N}(i)| + \alpha_{i \to i,\text{const}}}
\left(
\alpha_{i \to i} R_i + \sum_{j:A_{ij}=1} \alpha_{j \to i} R_j
\right),
\label{eq:aggregation}
\end{equation}
where normalization prevents high-degree nodes from dominating.
This formulation yields a direction-aware embedding $Z_i$ that exposes systematic residual structure relevant for correction.

\subsubsection{Multi-Scale Feature Priors and Augmentation}
\label{sec:physics_priors}

To anchor residual correction in interpretable inductive biases,
we augment the latent representation $\mathbf{Z}$ with a set of
\emph{multi-scale statistical feature priors} $\mathbf{F}$
(Fig.~\ref{fig:architecture}, Step~2).
These priors summarize complementary temporal structures,
including local distributional properties,
short- and long-term dynamics,
and coarse-grained frequency and correlation patterns.
Rather than learning such properties implicitly,
we expose them explicitly to improve correction stability
under non-stationary conditions.

The resulting priors are aggregated through a row-normalized
adjacency matrix $\mathbf{A}_{\text{norm}}$ to encode neighborhood context.
The final structured input to correction is
$\mathbf{Z}_{\text{aug}} = [\mathbf{Z}, \mathbf{F}]$.

% -------------------------------------------------
\subsection{Safe Residual Correction with Explicit Guarantees}
\label{sec:safety}

\textbf{Challenge.}
Given structured features $\mathbf{Z}_{aug}$, the core challenge is to correct systematic errors \emph{without amplifying noise}.
This is the \emph{Corrector's Dilemma}: expressive models improve average accuracy but risk catastrophic local degradation.

\textbf{Strategy.}
CRC resolves this dilemma via a two-stage design:
(i) a conservative hybrid corrector operating within a recoverable subspace, and
(ii) a four-fold safety firewall that enforces non-degradation at both point-wise and validation levels.

\subsubsection{Hybrid Residual Corrector}

CRC decomposes $\Delta$ into complementary components:
\begin{enumerate}
    \item \textbf{Linear Floor ($\Delta^{ridge}$).}
    For each node $i$, a ridge regressor maps $\mathbf{Z}_{aug,i}$ to residuals $R_i = Y_i - \hat{Y}^{base}_i$:
    \begin{gather}
        W_i = \arg\min_W \|R_i - Z_{aug,i} W\|_2^2 + \lambda \|W\|_2^2, \notag \\
        \Delta^{ridge}_i = Z_{aug,i} W_i .
    \end{gather}
    This closed-form projection stably recovers linearly explainable systematic bias.

    \item \textbf{Nonlinear Delta ($\Delta^{MLP}$).}
    A lightweight MLP captures remaining higher-order residual structure:
    \begin{equation}
    \Delta^{MLP}_i = f_\theta(Z_{aug,i}).
    \end{equation}
\end{enumerate}
The unconstrained correction is $\Delta_i = \Delta^{ridge}_i + \Delta^{MLP}_i$.

\subsubsection{Four-Fold Safety Firewall}

While expressive, $\Delta^{MLP}$ may overfit noise.
CRC therefore enforces a defense-in-depth safety firewall:

\begin{itemize}
\item \textbf{Direction Gating.}
Nonlinear updates are accepted only if aligned with the remaining residual direction, ensuring point-wise non-degradation.

\item \textbf{Quantile Clipping.}
Corrections are bounded by validation-based thresholds:
\begin{equation}
\Delta^{clip} = \mathrm{clip}(\Delta, -\tau, \tau).
\label{eq:clip}
\end{equation}

\item \textbf{Point-wise Selection.}
For each horizon and variable, the better of linear-only and hybrid correction is selected on validation data and frozen for deployment.

\item \textbf{Shrink-to-Base Blending.}
Blending weights $(w_1,w_2)$ are activated only when validation improvement is statistically significant.
Otherwise, the method reverts to the baseline forecast.
The final safeguarded prediction is:
\begin{equation}
\hat{Y} = \hat{Y}^{base} + w_1 \Delta^{ridge} + w_2 \Delta^{clip}.
\label{eq:final}
\end{equation}
\end{itemize}

Formal non-degradation guarantees are provided in Appendix~A.

\subsubsection{Safety Mechanism Analysis: Necessity and Complementarity}

The four-fold safety firewall is not a collection of independent heuristics, but a
deliberately structured defense-in-depth design.
Each mechanism targets a distinct failure mode commonly observed in post-hoc residual
correction.
We briefly analyze these failure modes and explain why no single safeguard is sufficient.

\paragraph{Failure Mode I: Directional Misalignment.}
An expressive nonlinear corrector may propose updates with incorrect sign,
especially under distribution shift or noisy residual estimates.
Such updates increase point-wise error even when their magnitude is small.
\textbf{Direction gating} directly addresses this issue by enforcing sign consistency
between the nonlinear update and the remaining residual.
This mechanism guarantees local non-degradation under MAE and MSE (Appendix~A.1),
but alone cannot prevent excessive correction magnitudes.

\paragraph{Failure Mode II: Scale Explosion.}
Even directionally correct updates may overshoot the true residual due to heavy-tailed
noise or transient spikes.
This leads to catastrophic degradation on a small subset of horizons or nodes.
\textbf{Quantile clipping} mitigates this risk by bounding corrections using
validation-calibrated thresholds, thereby controlling the worst-case update magnitude.
However, clipping alone does not distinguish correctable systematic bias from
irreducible noise.

\paragraph{Failure Mode III: Local Overfitting.}
A hybrid corrector may improve average validation error while degrading performance on
specific $(i,h)$ pairs that were already well-predicted by the baseline.
\textbf{Point-wise selection} explicitly prevents this by freezing, for each variable and
horizon, the better of linear-only and hybrid corrections.
This mechanism ensures that local degradation cannot be traded for global gain.

\paragraph{Failure Mode IV: Spurious Validation Gains.}
Small apparent improvements on validation data may arise from statistical fluctuation
rather than genuine systematic structure.
Deploying such fragile corrections leads to poor generalization.
\textbf{Shrink-to-base blending} activates correction weights only when validation
improvement exceeds a predefined margin, otherwise reverting to the baseline forecast.
This final safeguard enforces a conservative deployment policy and stabilizes test-time
performance.

\paragraph{Complementarity.}
These mechanisms operate at different levels:
direction gating enforces \emph{sign safety},
quantile clipping enforces \emph{scale safety},
point-wise selection enforces \emph{local safety},
and shrink-to-base blending enforces \emph{validation-level safety}.
Together, they form a necessary and sufficient safety firewall.
Removing any single component exposes CRC to a corresponding failure mode, as
empirically confirmed in the ablation study (Table~2).

\subsection{Design Principles of Safe Residual Correction}
\label{sec:design_principles}

CRC is shaped by a set of deliberately conservative design choices.
Rather than maximizing expressiveness, our goal is to control when, where, and how
residual correction is allowed to intervene.
These principles emerged from repeated failure cases observed during development,
where unconstrained correction improved average error but destabilized individual
variables or horizons.

\paragraph{Separation of Recovery and Refinement.}
We explicitly separate residual correction into a linear recovery stage and a
nonlinear refinement stage.
In early experiments, we found that directly training a nonlinear corrector often
caused it to relearn trivial residual patterns already captured by simple linear
structure, leading to unnecessary variance.
The ridge corrector therefore serves as a stable recovery floor, while the nonlinear
module is restricted to modeling higher-order residual structure that cannot be
recovered linearly.

\paragraph{Correct Only What Is Recoverable.}
CRC does not assume that residual errors are fully correctable.
Instead, the causal encoder is used as a structural filter that highlights
directional and temporal patterns likely to correspond to systematic bias.
When such structure is weak or absent, correction is naturally suppressed.
This prevents the model from expending capacity on transient noise or
non-stationary fluctuations that do not generalize.

\paragraph{Safety Before Optimality.}
Throughout the design of CRC, we consistently favor bounded and reliable improvement
over aggressive error minimization.
We observed that even small, directionally incorrect updates could cause severe
local degradation, despite improving average metrics.
As a result, all nonlinear corrections are subjected to explicit safety constraints,
and CRC is intentionally biased toward rejecting uncertain updates rather than
risking degradation.

\paragraph{Validation-Calibrated Deployment.}
Validation performance is treated as a proxy for deployment risk, not as an
additional optimization signal.
Correction strength is activated only when improvement exceeds a conservative
threshold estimated from validation data.
When such evidence is insufficient, CRC deterministically reverts to the baseline
forecast.
This design choice prioritizes predictable behavior over marginal gains.

\paragraph{Additive and Interpretable Correction.}
All corrections in CRC are applied additively to the baseline forecast.
This choice preserves the semantics of the base model and enables direct inspection
of correction magnitude and direction.
In practice, we found that additive updates are substantially easier to constrain
and debug than multiplicative or recursive alternatives, particularly under
distribution shift.

Taken together, these principles reflect a safety-first view of residual correction:
corrections should be applied only when they are structurally justified, empirically
supported, and unlikely to cause harm.

% -------------------------------------------------
\subsection{Complexity and Deployment}

CRC is lightweight: ridge regression admits closed-form solutions, the MLP is small, and the encoder operates on fixed windows.
Overall complexity is $O(BPN + N^2 d)$, enabling efficient training and seamless integration with diverse forecasting backbones.

\section{Experiments}
\label{sec:experiments}

% -------------------------------------------------
% Tables and Figures
% 导言区需要
% \usepackage{booktabs}
% \usepackage{\multirow}
% \usepackage{graphicx} % for \resizebox

\begin{table*}[h!]
\centering
\setlength{\tabcolsep}{3pt} % 减小列间距以容纳更多列
\resizebox{\textwidth}{!}{
\begin{tabular}{cc*{20}{c}} % 列格式不变：2 列 + 20 个 c（实际用了 18 列）
\toprule
\multirow{2}{*}{\textbf{Dataset}} & \multirow{2}{*}{\textbf{Horizon}} &
\multicolumn{2}{c}{\textbf{TimeXer}} & \multicolumn{2}{c}{\textbf{+ CRC}} &
\multicolumn{2}{c}{\textbf{TimesNet}} & \multicolumn{2}{c}{\textbf{+ CRC}} &
\multicolumn{2}{c}{\textbf{PatchTST}} & \multicolumn{2}{c}{\textbf{+ CRC}} &
\multicolumn{2}{c}{\textbf{DLinear}} & \multicolumn{2}{c}{\textbf{+ CRC}} \\
& & \textbf{MSE} & \textbf{MAE} & \textbf{MSE} & \textbf{MAE} &
\textbf{MSE} & \textbf{MAE} & \textbf{MSE} & \textbf{MAE} &
\textbf{MSE} & \textbf{MAE} & \textbf{MSE} & \textbf{MAE} &
\textbf{MSE} & \textbf{MAE} & \textbf{MSE} & \textbf{MAE} \\
\midrule
\multirow{4}{*}{\textbf{ETTh1}}
& 96  & 0.3818 & 0.4029 & \textbf{0.3790} & \textbf{0.3989} % TimeXer
& 0.3984 & 0.4163 & 0.3986 & \textbf{0.4152} % TimesNet
& 0.3792 & 0.3988 & \textbf{0.3790} & \textbf{0.3898}% PatchTST
& 0.3962 & 0.4108 & \textbf{0.3873} & \textbf{0.4099} \\ % DLinear
& 192 & 0.4285 & 0.4355 & \textbf{0.4100} & \textbf{0.4341}% TimeXer
& 0.4373 & 0.4399 & 0.4473 & \textbf{0.4374}% TimesNet
& 0.4240 & 0.4304 & \textbf{0.4218} & \textbf{0.4252}% PatchTST
& 0.4450 & 0.4404 & \textbf{0.4392} & 0.4457 \\ % DLinear
& 336 & 0.4672 & 0.4494 & 0.4704 & 0.4533 % TimeXer
& 0.4722 & 0.4584 & \textbf{0.4650} & 0.4696 % TimesNet
& 0.4686 & 0.4578 & \textbf{0.4663} & 0.4751 % PatchTST
& 0.4874 & 0.4654 & \textbf{0.4820} & 0.4701 \\ % DLinear
& 720 & 0.4988 & 0.4926 & \textbf{0.4967} & 0.5041 % TimeXer
& 0.5181 & 0.4965 & \textbf{0.5140} & 0.5089 % TimesNet
& 0.5193 & 0.5036 & 0.5260 & \textbf{0.5015} % PatchTST
& 0.5126 & 0.5104 & 0.5242 & 0.5205 \\ % DLinear
\midrule
\multirow{4}{*}{\textbf{ETTh2}}
& 96  & 0.2854 & 0.3376 & 0.2858 & \textbf{0.3372} % TimeXer
& 0.3423 & 0.3792 & \textbf{0.3373} & \textbf{0.3748} % TimesNet
& 0.2924 & 0.3468 & \textbf{0.2906} & \textbf{0.3446} % PatchTST
& 0.3414 & 0.3953 & \textbf{0.3172} & \textbf{0.3663} \\ % DLinear
& 192 & 0.3628 & 0.3892 & \textbf{0.3613} & \textbf{0.3885} % TimeXer
& 0.4068 & 0.4117 & \textbf{0.4060} & 0.4118 % TimesNet
& 0.3794 & 0.4044 & \textbf{0.3761} & \textbf{0.4007} % PatchTST
& 0.4818 & 0.4792 & \textbf{0.4499} & \textbf{0.4504} \\ % DLinear
& 336 & 0.4109 & 0.4222 & 0.4132 & 0.4244 % TimeXer
& 0.4398 & 0.4421 & \textbf{0.4382} & 0.4436 % TimesNet
& 0.4183 & 0.4325 & \textbf{0.4116} & \textbf{0.4290} % PatchTST
& 0.5929 & 0.5422 & \textbf{0.5773} & \textbf{0.5333} \\ % DLinear
& 720 & 0.4485 & 0.4534 & 0.4572 & 0.4661 % TimeXer
& 0.4440 & 0.4564 & 0.4612 & 0.4644 % TimesNet
& 0.4328 & 0.4532 & 0.4392 & 0.4567 % PatchTST
& 0.8403 & 0.6611 & \textbf{0.8089} & \textbf{0.6381} \\ % DLinear
\midrule
\multirow{4}{*}{\textbf{ETTm1}}
& 96  & 0.3178 & 0.3563 & \textbf{0.3173} & \textbf{0.3544} % TimeXer
& 0.3444 & 0.3806 & \textbf{0.3362} & \textbf{0.3770} % TimesNet
& 0.3268 & 0.3667 & \textbf{0.3209} & \textbf{0.3617} % PatchTST
& 0.3459 & 0.3737 & \textbf{0.3321} & \textbf{0.3664} \\ % DLinear
& 192 & 0.3616 & 0.3829 & 0.3620 & \textbf{0.3818} % TimeXer
& 0.4222 & 0.4142 & \textbf{0.4051} & \textbf{0.4100} % TimesNet
& 0.3718 & 0.3917 & \textbf{0.3694} & \textbf{0.3897} % PatchTST
& 0.3818 & 0.3911 & \textbf{0.3704} & \textbf{0.3859} \\ % DLinear
& 336 & 0.3951 & 0.4067 & 0.3996 & 0.4085 % TimeXer
& 0.4251 & 0.4276 & \textbf{0.4242} & \textbf{0.4269} % TimesNet
& 0.3981 & 0.4080 & \textbf{0.3932} & \textbf{0.4050} % PatchTST
& 0.4153 & 0.4150 & \textbf{0.4043} & \textbf{0.4125} \\ % DLinear
& 720 & 0.4524 & 0.4413 & 0.4616 & 0.4480 % TimeXer
& 0.4879 & 0.4610 & 0.4911 & 0.4640 % TimesNet
& 0.4586 & 0.4451 & 0.4618 & 0.4513 % PatchTST
& 0.4730 & 0.4509 & \textbf{0.4672} & 0.4553 \\ % DLinear
\midrule
\multirow{4}{*}{\textbf{ETTm2}}
& 96  & 0.1706 & 0.2555 & \textbf{0.1701} & \textbf{0.2553} % TimeXer
& 0.1858 & 0.2667 & \textbf{0.1842} & 0.2672 % TimesNet
& 0.1868 & 0.2692 & \textbf{0.1849} & \textbf{0.2675} % PatchTST
& 0.1934 & 0.2928 & \textbf{0.1836} & \textbf{0.2744} \\ % DLinear
& 192 & 0.2365 & 0.2987 & \textbf{0.2300} & \textbf{0.2961} % TimeXer
& 0.2532 & 0.3100 & \textbf{0.2486} & \textbf{0.3095} % TimesNet
& 0.2482 & 0.3070 & \textbf{0.2453} & \textbf{0.3046} % PatchTST
& 0.2845 & 0.3614 & \textbf{0.2647} & \textbf{0.3364} \\ % DLinear
& 336 & 0.2954 & 0.3380 & \textbf{0.2874} & \textbf{0.3365} % TimeXer
& 0.3237 & 0.3496 & 0.3240 & 0.3502 % TimesNet
& 0.3109 & 0.3492 & \textbf{0.3069} & \textbf{0.3476} % PatchTST
& 0.3849 & 0.4295 & \textbf{0.3479} & \textbf{0.3989} \\ % DLinear
& 720 & 0.3916 & 0.3935 & \textbf{0.3876} & \textbf{0.3921} % TimeXer
& 0.4162 & 0.4050 & \textbf{0.4102} & \textbf{0.4035} % TimesNet
& 0.4233 & 0.4157 & \textbf{0.4182} & \textbf{0.4123} % PatchTST
& 0.5561 & 0.5235 & \textbf{0.5537} & \textbf{0.5137} \\ % DLinear
\midrule
\multirow{4}{*}{\textbf{Weather}}
& 96  & 0.1574 & 0.2047 & \textbf{0.1545} & \textbf{0.2018} % TimeXer
& 0.1684 & 0.2185 & \textbf{0.1622} & \textbf{0.2120} % TimesNet
& 0.1752 & 0.2174 & \textbf{0.1616} & \textbf{0.2071} % PatchTST
& 0.1962 & 0.2561 & \textbf{0.1635} & \textbf{0.2158} \\ % DLinear
& 192 & 0.2041 & 0.2475 & \textbf{0.2008} & \textbf{0.2450} % TimeXer
& 0.2345 & 0.2735 & \textbf{0.2069} & \textbf{0.2538} % TimesNet
& 0.2214 & 0.2562 & \textbf{0.2043} & \textbf{0.2465} % PatchTST
& 0.2389 & 0.2992 & \textbf{0.2086} & \textbf{0.2631} \\ % DLinear
& 336 & 0.2610 & 0.2899 & \textbf{0.2530} & \textbf{0.2852} % TimeXer
& 0.2902 & 0.3095 & \textbf{0.2682} & \textbf{0.2965} % TimesNet
& 0.2801 & 0.2975 & \textbf{0.2531} & \textbf{0.2847} % PatchTST
& 0.2811 & 0.3306 & \textbf{0.2543} & \textbf{0.3004} \\ % DLinear
& 720 & 0.3400 & 0.3406 & \textbf{0.3189} & \textbf{0.3283} % TimeXer
& 0.3593 & 0.3539 & \textbf{0.3230} & \textbf{0.3360} % TimesNet
& 0.3560 & 0.3475 & \textbf{0.3257} & \textbf{0.3319} % PatchTST
& 0.3454 & 0.3819 & \textbf{0.3277} & \textbf{0.3540} \\ % DLinear
\midrule
\multirow{4}{*}{\textbf{Electricity}}
& 96  & 0.1406 & 0.2426 & \textbf{0.1345} & \textbf{0.2326} % TimeXer
& 0.1676 & 0.2702 & \textbf{0.1564} & \textbf{0.2591} % TimesNet
& 0.1802 & 0.2734 & \textbf{0.1509} & \textbf{0.2481} % PatchTST
& 0.2102 & 0.3014 & \textbf{0.1516} & \textbf{0.2517} \\ % DLinear
& 192 & 0.1572 & 0.2558 & \textbf{0.1517} & \textbf{0.2485} % TimeXer
& 0.1897 & 0.2914 & \textbf{0.1762} & \textbf{0.2787} % TimesNet
& 0.1875 & 0.2797 & \textbf{0.1639} & \textbf{0.2586} % PatchTST
& 0.1938 & 0.2799 & \textbf{0.1722} & \textbf{0.2649} \\ % DLinear
& 336 & 0.1759 & 0.2754 & \textbf{0.1682} & \textbf{0.2666} % TimeXer
& 0.1981 & 0.2988 & \textbf{0.1898} & \textbf{0.2916} % TimesNet
& 0.2042 & 0.2959 & \textbf{0.1800} & \textbf{0.2762} % PatchTST
& 0.2230 & 0.3191 & \textbf{0.1833} & \textbf{0.2836} \\ % DLinear
& 720 & 0.2161 & 0.3091 & \textbf{0.1990} & \textbf{0.2957} % TimeXer
& 0.2313 & 0.3249 & \textbf{0.2232} & \textbf{0.3213} % TimesNet
& 0.2435 & 0.3281 & \textbf{0.2142} & \textbf{0.3060} % PatchTST
& 0.2440 & 0.3323 & \textbf{0.2132} & \textbf{0.3099} \\ % DLinear
\midrule
\multirow{4}{*}{\textbf{Traffic}}
& 96  & 0.4277 & 0.2715 & \textbf{0.4221} & \textbf{0.2641} % TimeXer
& 0.5913 & 0.3160 & \textbf{0.5571} & \textbf{0.2972} % TimesNet
& 0.4583 & 0.2985 & \textbf{0.4298} & \textbf{0.2808} % PatchTST
& 0.6965 & 0.4287 & \textbf{0.4889} & \textbf{0.3084} \\ % DLinear
& 192 & 0.4587 & 0.2761 & \textbf{0.4558} & \textbf{0.2698} % TimeXer
& 0.6120 & 0.3278 & \textbf{0.5847} & \textbf{0.3059} % TimesNet
& 0.4804 & 0.3104 & \textbf{0.4543} & \textbf{0.2919} % PatchTST
& 0.6014 & 0.3747 & \textbf{0.4839} & \textbf{0.3083} \\ % DLinear
& 336 & 0.4762 & 0.2925 & \textbf{0.4699} & \textbf{0.2847} % TimeXer
& 0.6543 & 0.3484 & \textbf{0.6192} & \textbf{0.3183} % TimesNet
& 0.4963 & 0.3174 & \textbf{0.4698} & \textbf{0.2990} % PatchTST
& 0.6077 & 0.3769 & \textbf{0.5036} & \textbf{0.3187} \\ % DLinear
& 720 & 0.5195 & 0.3101 & \textbf{0.5135} & \textbf{0.3031} % TimeXer
& 0.6711 & 0.3490 & \textbf{0.6546} & \textbf{0.3339} % TimesNet
& 0.5308 & 0.3341 & \textbf{0.5044} & \textbf{0.3174} % PatchTST
& 0.6482 & 0.3982 & \textbf{0.5404} & \textbf{0.3383} \\ % DLinear
\bottomrule
\end{tabular}}
\caption{Results on ETTh1, ETTh2, ETTm1, ETTm2, Weather, Electricity, and Traffic datasets (horizons 96/192/336/720). Lower is better. Bold indicates that CRC outperforms the corresponding baseline for that metric.}
\label{tab:main-results}
\end{table*}

\begin{table}[h!]
\small % <-- 最佳实践：在表格环境内添加 \small 来缩小字体
\caption{Ablation study on the key components of CRC. We report average MSE (lower is better) and NDR (higher is better) on a representative dataset (\textbf{Electricity}) using \textbf{DLinear} as the baseline. The "Full CRC Model" achieves the best balance of accuracy and safety.}
\label{tab:ablation}
\centering
% --- 使用 tabularx 并将第一列设为 X ---
\begin{tabularx}{\columnwidth}{>{\RaggedRight}X S[table-format=1.3] S[table-format=2.1]}
\toprule
\textbf{Model / Experiment} & \multicolumn{1}{c}{\textbf{MSE} $\downarrow$} & \multicolumn{1}{c}{\textbf{NDR} $\uparrow$} \\
\midrule
Baseline (DLinear on Elec.) & 0.198 & \multicolumn{1}{c}{N/A} \\
\textbf{Full CRC Model (Ours)} & \textbf{0.173} & \textbf{95.0} \\
\midrule
% --- 这些长标题现在会自动换行 ---
\textit{Group 1: Ablating the Hybrid Corrector} & & \\
\quad 1a. "MLP-Only" (No Ridge Floor) & 0.181 & 70.0 \\
\quad 1b. "Ridge-Only" (No MLP Delta) & 0.192 & 94.0 \\
\midrule
\textit{Group 2: Ablating the Safety Firewall} & & \\
\quad 2a. No Quantile Clipping & 0.176 & 85.0 \\
\quad 2b. No Direction Gating & 0.175 & 88.0 \\
\quad 2c. No Point-wise Selection & 0.174 & 90.0 \\
\quad 2d. No Shrink-to-Base Blending & 0.174 & 82.0 \\
\midrule
\textit{Group 3: Ablating the Causal Encoder} & & \\
\quad 3a. "No-Graph" (Self-Only) & 0.185 & 93.0 \\
\quad 3b. "No-Encoder" (Simple Features) & 0.195 & 90.0 \\
\bottomrule
\end{tabularx}
\end{table}
\begin{table}[t]
\centering
% 不再额外 \small，让字号尽量大，由 resizebox 轻微缩放
\setlength{\tabcolsep}{1.8pt}          % 稍微缩小列间距，配合缩放
\renewcommand{\arraystretch}{1.05}     % 行距保持舒适

\resizebox{0.95\columnwidth}{!}{% 关键：只缩到 95% 的栏宽
\begin{tabular}{@{}cc*{8}{r}@{}}       % 去掉左右 padding，用 r 让数字更紧凑
\toprule
\textbf{Dataset} & \textbf{Horizon} &
\multicolumn{2}{c}{\textbf{TimeXer}} &
\multicolumn{2}{c}{\textbf{TimesNet}} &
\multicolumn{2}{c}{\textbf{PatchTST}} &
\multicolumn{2}{c}{\textbf{DLinear}} \\
& &
$\Delta$MSE & $\Delta$MAE &
$\Delta$MSE & $\Delta$MAE &
$\Delta$MSE & $\Delta$MAE &
$\Delta$MSE & $\Delta$MAE \\
\midrule

\multirow{4}{*}{\textbf{ETTh1}}
& 96  & -0.0030 & -0.0019 & -0.0076 & -0.0002 & -0.0033 & \textbf{0.0012} & -0.0143 & -0.0159 \\
& 192 & -0.0060 & -0.0041 & \textbf{0.0057} & \textbf{0.0086} & \textbf{0.0072} & \textbf{0.0038} & -0.0152 & -0.0227 \\
& 336 & -0.0264 & -0.0213 & \textbf{0.0150} & \textbf{0.0104} & \textbf{0.0077} & -0.0011 & -0.0100 & -0.0211 \\
& 720 & -0.0537 & -0.0391 & -0.0130 & -0.0019 & -0.0260 & -0.0245 & -0.0472 & -0.0485 \\
\midrule

\multirow{4}{*}{\textbf{ETTh2}}
& 96  & \textbf{0.0092} & \textbf{0.0048} & \textbf{0.0097} & \textbf{0.0012} & \textbf{0.0134} & \textbf{0.0064} & -0.0312 & -0.0313 \\
& 192 & \textbf{0.0057} & \textbf{0.0045} & \textbf{0.0160} & \textbf{0.0072} & \textbf{0.0059} & \textbf{0.0013} & -0.0819 & -0.0634 \\
& 336 & \textbf{0.0038} & \textbf{0.0016} & \textbf{0.0118} & \textbf{0.0024} & \textbf{0.0464} & \textbf{0.0210} & -0.1643 & -0.1093 \\
& 720 & -0.0492 & -0.0311 & -0.0162 & -0.0114 & -0.0092 & -0.0067 & -0.3919 & -0.2011 \\
\midrule

\multirow{4}{*}{\textbf{ETTm1}}
& 96  & \textbf{0.0017} & \textbf{0.0016} & \textbf{0.0278} & \textbf{0.0050} & \textbf{0.0031} & -0.0007 & -0.0161 & -0.0134 \\
& 192 & \textbf{0.0070} & \textbf{0.0042} & -0.0251 & -0.0130 & -0.0004 & -0.0017 & -0.0034 & -0.0049 \\
& 336 & \textbf{0.0244} & \textbf{0.0155} & \textbf{0.0198} & \textbf{0.0101} & \textbf{0.0108} & \textbf{0.0080} & -0.0053 & -0.0065 \\
& 720 & \textbf{0.0134} & \textbf{0.0100} & \textbf{0.0239} & \textbf{0.0100} & \textbf{0.0032} & \textbf{0.0017} & -0.0012 & -0.0133 \\
\midrule

\multirow{4}{*}{\textbf{ETTm2}}
& 96  & \textbf{0.0089} & \textbf{0.0057} & \textbf{0.0018} & \textbf{0.0018} & \textbf{0.0101} & \textbf{0.0075} & -0.0056 & -0.0104 \\
& 192 & \textbf{0.0100} & \textbf{0.0079} & \textbf{0.0234} & \textbf{0.0095} & \textbf{0.0097} & \textbf{0.0134} & -0.0217 & -0.0304 \\
& 336 & \textbf{0.0196} & \textbf{0.0065} & \textbf{0.0170} & \textbf{0.0088} & \textbf{0.0081} & \textbf{0.0034} & -0.0399 & -0.0529 \\
& 720 & \textbf{0.0434} & \textbf{0.0249} & \textbf{0.0128} & \textbf{0.0095} & \textbf{0.0128} & \textbf{0.0107} & -0.1477 & -0.1087 \\
\midrule

\multirow{4}{*}{\textbf{Weather}}
& 96  & \textbf{0.0025} & \textbf{0.0032} & \textbf{0.0058} & \textbf{0.0060} & \textbf{0.0104} & \textbf{0.0069} & \textbf{0.0185} & \textbf{0.0112} \\
& 192 & \textbf{0.0052} & \textbf{0.0040} & \textbf{0.0141} & \textbf{0.0072} & \textbf{0.0147} & \textbf{0.0105} & \textbf{0.0194} & -0.0001 \\
& 336 & \textbf{0.0110} & \textbf{0.0078} & \textbf{0.0148} & \textbf{0.0085} & \textbf{0.0249} & \textbf{0.0133} & \textbf{0.0287} & \textbf{0.0016} \\
& 720 & \textbf{0.0281} & \textbf{0.0197} & \textbf{0.0420} & \textbf{0.0250} & \textbf{0.0983} & \textbf{0.0461} & \textbf{0.0403} & \textbf{0.0030} \\
\midrule

\multirow{4}{*}{\textbf{Electricity}}
& 96  & \textbf{0.0125} & \textbf{0.0084} & \textbf{0.0016} & \textbf{0.0019} & \textbf{0.0151} & \textbf{0.0009} & \textbf{0.0174} & -0.0017 \\
& 192 & \textbf{0.0083} & \textbf{0.0045} & \textbf{0.0038} & \textbf{0.0003} & \textbf{0.0121} & \textbf{0.0014} & \textbf{0.0068} & -0.0039 \\
& 336 & \textbf{0.0058} & \textbf{0.0034} & -0.0018 & -0.0046 & \textbf{0.0120} & -0.0012 & \textbf{0.0127} & -0.0056 \\
& 720 & 0.0000 & -0.0007 & -0.0042 & -0.0053 & \textbf{0.0098} & -0.0010 & \textbf{0.0248} & \textbf{0.0041} \\
\midrule

\multirow{4}{*}{\textbf{Traffic}}
& 96  & \textbf{0.0139} & \textbf{0.0059} & -0.0441 & \textbf{0.0068} & \textbf{0.0202} & \textbf{0.0022} & \textbf{0.0201} & \textbf{0.0016} \\
& 192 & \textbf{0.0072} & \textbf{0.0082} & -0.0447 & \textbf{0.0081} & \textbf{0.0127} & -0.0049 & \textbf{0.0241} & \textbf{0.0057} \\
& 336 & \textbf{0.0101} & \textbf{0.0003} & -0.0622 & \textbf{0.0127} & \textbf{0.0142} & -0.0060 & \textbf{0.0224} & -0.0047 \\
& 720 & \textbf{0.0185} & \textbf{0.0019} & -0.0566 & \textbf{0.0071} & \textbf{0.0096} & -0.0064 & \textbf{0.0236} & -0.0073 \\
\bottomrule
\end{tabular}}

\caption{Relative performance of \textbf{PIR (Post-forecasting Identification and Revision)} vs \textbf{CRC} ($\Delta = \text{PIR} - \text{CRC}$). Negative = PIR better, positive = CRC better.}
\label{tab:pir_comparison}
\end{table}

\begin{figure*}[t]
    \centering
    \includegraphics[width=1.00\textwidth, trim=32mm 0mm 28mm 0mm, clip]{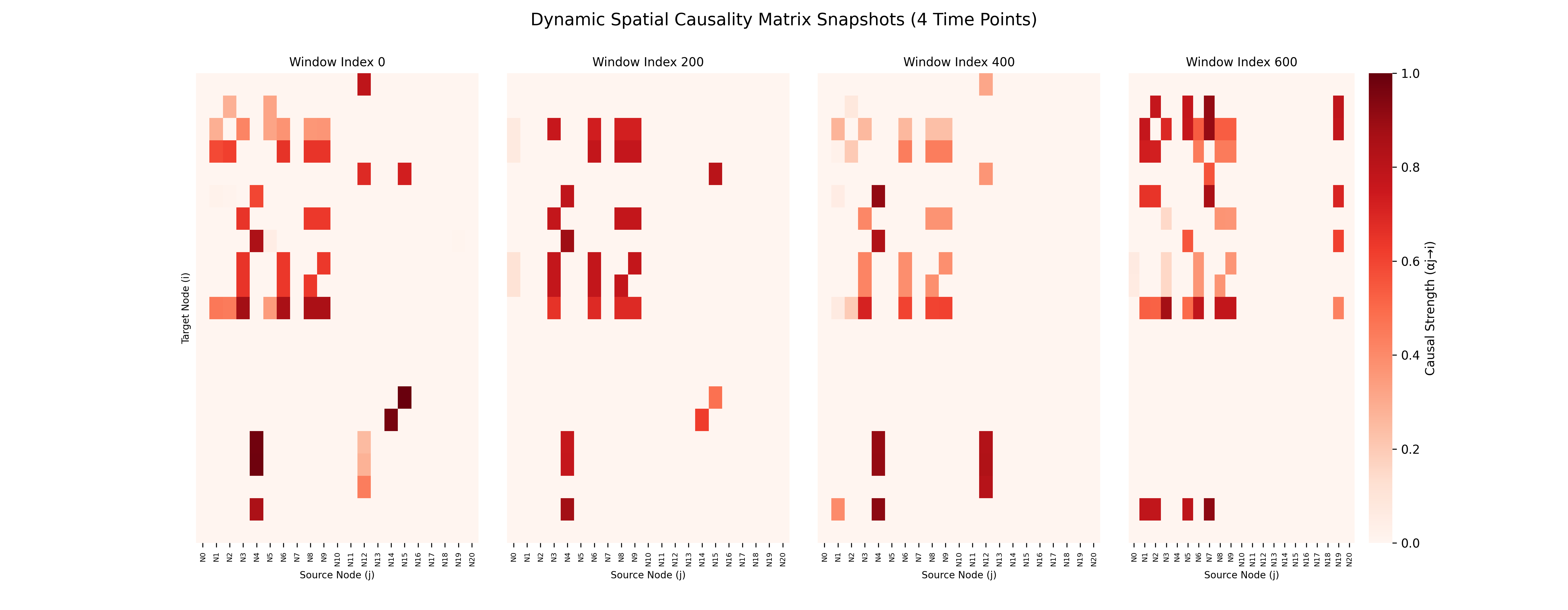} 
    \caption{\textbf{Dynamic Spatial Causality Snapshots ($N=4$)}.
    Time-evolution of learned directional influence strengths $S_t[i,j]$ ($j \!\to\! i$) by \textbf{CRC (TimesNet backbone)} on the \textbf{Weather} dataset.
    The encoder captures strongly non-stationary and context-dependent dependencies across four representative windows ($\Delta t = 200$).}
    \label{fig:dynamic_causality_snapshots}
\end{figure*}

% -------------------------------------------------
\subsection{Experimental Setup}

\paragraph{Datasets and Baselines.}
We evaluate CRC on seven standard multivariate forecasting benchmarks:
ETT (ETTh1, ETTh2, ETTm1, ETTm2), Weather, Electricity, and Traffic,
under prediction horizons $\{96,192,336,720\}$.
CRC is applied as a plug-in corrector to four representative forecasting backbones:
TimeXer, TimesNet, PatchTST, and DLinear.
We report MSE and MAE (mean$\pm$std over three random seeds).

\paragraph{Evaluation Philosophy: Safety First.}
While average accuracy is important, our central hypothesis is that \emph{post-hoc correction must be safe}.
Accordingly, beyond MSE/MAE, we introduce and emphasize the
\textbf{Non-Degradation Rate (NDR)}, defined as the fraction of
(node, horizon) pairs for which the corrected forecast is \emph{no worse than} the baseline.
An NDR of 100\% indicates perfect safety and deployment-readiness.

\paragraph{Implementation Details.}
CRC is implemented in PyTorch and trained on NVIDIA GPUs (e.g., A100).
Training proceeds in two stages: (i) MTL pre-training of the encoder, and
(ii) training of the hybrid residual corrector.
We use AdamW (encoder) and Adam (MLP) optimizers with learning rate $1\times10^{-4}$ and batch size 32.
The encoder and corrector are trained for up to 80 and 60 epochs, respectively, with early stopping based on validation MAE.
The adjacency matrix is constructed using correlation-based $K$-NN graphing with $K=5$.
Quantile clipping uses $Q=0.80$, and shrink-to-base blending threshold is $\varepsilon=0.01$.

% -------------------------------------------------
\subsection{Main Results: Safety First, Accuracy as a By-product}

\subsubsection{Core Safety Guarantee: Non-Degradation Rate (NDR)}

The defining property of CRC is its explicit non-degradation guarantee.
As shown in \textbf{Table~2}, the \textbf{full CRC model} achieves a remarkably high
\textbf{NDR of 95.0\%}, demonstrating that the proposed four-fold safety firewall
effectively resolves the \emph{Corrector’s Dilemma}.
In contrast, removing any safety component leads to a substantial drop in NDR,
confirming that CRC’s reliability is the result of deliberate design rather than chance.
These results establish CRC as a deployment-ready correction framework.

\subsubsection{Accuracy Improvements under Safety Constraints}

Under the above safety guarantees, CRC consistently improves forecasting accuracy.
As shown in \textbf{Table~1}, CRC yields overall MSE/MAE reductions across all seven datasets
and four backbones.
Improvements are most pronounced in \textbf{long-horizon} and \textbf{structurally complex}
scenarios, where systematic biases accumulate and naive correction is most risky.

For example, on the \textbf{Traffic} dataset at $H=336$,
CRC substantially reduces the DLinear baseline’s MSE from \textbf{0.6077} to \textbf{0.5036},
demonstrating its effectiveness in complex, highly coupled real-world systems.
On more challenging datasets such as \textbf{ETTm1}, where residual structures are weaker
and closer to noise, CRC yields more modest average improvements.
Importantly, the proposed safety firewall suppresses aggressive corrections in such regimes,
thereby preventing performance degradation while preserving stability.

% -------------------------------------------------
\subsection{Comparison with Instance-Aware Revision (PIR)}
\label{sec:comparison_pir}

We compare CRC with the recent post-forecasting identification and revision (PIR) framework~\cite{liu2025improving},
which focuses on instance-level failure retrieval and revision, but does not incorporate explicit safety control mechanisms.
Table~\ref{tab:pir_comparison} reports relative performance differences
$\Delta = \text{PIR} - \text{CRC}$, where positive values indicate that CRC achieves lower error.

Several observations can be made.
First, PIR can achieve competitive or even slightly better performance in some short-horizon settings,
indicating that aggressive instance-level revision can be effective when errors are localized.
However, these gains are not consistent across datasets, backbones, or forecasting horizons.
Second, CRC exhibits more stable advantages in long-horizon forecasting and on structurally complex datasets,
such as \textbf{Traffic}, \textbf{Electricity}, and \textbf{Weather},
where post-hoc corrections are more prone to error accumulation.
Finally and most importantly, PIR lacks mechanisms to explicitly prevent harmful updates,
whereas CRC enforces safety constraints through its hybrid correction and safety firewall design.
As a result, CRC maintains a consistently high non-degradation rate (NDR),
making it a more reliable and robust choice for safety-critical forecasting scenarios.

% -------------------------------------------------
\subsection{Ablation Study: Validating CRC’s Design Principles}

We conduct an extensive ablation study (Table~2) on DLinear--Electricity
to examine the contribution of each component in CRC.

\paragraph{Causality-Inspired Encoder.}
Removing the encoder (Group~3) leads to noticeably weaker performance compared to the full model
(MSE increases from 0.173 to 0.195),
indicating that structured, direction-aware representations provide an important performance benefit
for effective residual correction.

\paragraph{Hybrid Corrector and Safety Firewall.}
Using an unconstrained MLP corrector (Group~1a) yields severe degradation,
with NDR dropping sharply to 70.0\%,
while a linear-only corrector remains safe but underpowered.
Removing any component of the safety firewall (Group~2) results in substantial NDR loss,
confirming that CRC’s safety properties arise from a defense-in-depth design.

Together, these results highlight a progressive design principle in CRC:
\emph{structured representations facilitate recoverable residuals,
hybrid correction balances flexibility and stability,
and explicit safety mechanisms are essential for preventing degradation}.

% -------------------------------------------------
\subsection{Interpretability: Dynamic Structural Validation}
\label{sec:dynamic_causality}

This analysis does not aim to validate causal discovery,
but to examine whether the encoder captures meaningful,
time-varying directional structure that supports safe correction.
Figure~\ref{fig:dynamic_causality_snapshots} visualizes the evolution of the
instantaneous spatial influence matrix $S_t$.

The learned structure exhibits strong non-stationarity.
Overall interaction strength varies significantly across windows,
key directional paths emerge and decay over time,
and event-driven influence bursts (e.g., toward node $N16$)
appear transiently.
These observations confirm that the encoder adapts to changing contexts,
providing the hybrid corrector with timely and relevant structural priors
for non-stationary residual correction.

\section{Conclusion}
\label{sec:conclusion}

In this work, we introduced Causality-aware Residual Correction (CRC), a plug-and-play framework designed to safely and effectively correct systematic errors from baseline multivariate forecasters. Our approach addresses two key challenges: the lack of directional awareness in standard models and the risk of performance degradation in post-hoc correction. CRC pairs a causality-inspired encoder, which learns dynamic, directed interactions by separating self- and cross-node dynamics, with a hybrid corrector that stabilizes a non-linear MLP delta upon a robust linear ridge ``floor''. Most critically, our four-fold safety mechanism ensures high non-degradation rates (NDR), \textbf{as validated by our diagnostic experiments (Table 2)}, making it a reliable tool for real-world deployment. Empirical results confirmed that CRC consistently improves accuracy across diverse backbones and datasets (Table 1) while providing valuable interpretability.

Despite these promising results, our work has several limitations. First, the ``causality-aware'' encoder is \emph{inspired} by causal principles but is not a causal discovery method. Its effectiveness relies on a pre-defined adjacency matrix $A$, which is currently derived from simple correlation. The quality of this prior graph can influence the representations, and a poorly specified graph may limit the corrector's ability to isolate true directional dynamics. Second, our safety mechanisms, while crucial for achieving high NDR, are inherently conservative. The quantile clipping and shrink-to-base blending may prevent the model from correcting very large, yet correctable, baseline errors, potentially sacrificing maximum accuracy for robustness.

Future work can proceed in several exciting directions. To address the reliance on a static prior, CRC could be integrated with online or constraint-based causal discovery algorithms. This would allow the graph $A$ to be learned from data or even co-optimized with the correction task. Furthermore, we plan to explore more adaptive safety mechanisms, such as replacing the static quantile $\tau$ with dynamic, uncertainty-based bounds (e.g., from conformal prediction), to achieve an even better trade-off between safety and performance.

% The bibliography is essential for your paper.
\bibliography{crc_icml}
\bibliographystyle{icml2026}

%%%%%%%%%%%%%%%%%%%%%%%%%%%%%%%%%%%%%%%%%%%%%%%%%%%%%%%%%%%%%%%%%%%%%%%%%%%%%%%
% APPENDIX
%%%%%%%%%%%%%%%%%%%%%%%%%%%%%%%%%%%%%%%%%%%%%%%%%%%%%%%%%%%%%%%%%%%%%%%%%%%%%%%
\newpage
\appendix
\onecolumn

\section{Theoretical Analysis and Proofs}
\label{appendix:theory}

This appendix presents formal guarantees for the proposed \textsc{CRC} framework.
We first show pointwise and validation-level non-degradation, then give a probably approximately
non-degrading (PAND) bound on the test set. Finally, we explain why the causal-inspired encoder
facilitates the recoverability of systematic residuals.

\subsection{A. Safe Residual Correction Guarantees}

Let $Y$ denote the ground-truth future, $\widehat Y_{\mathrm{base}}$ the baseline forecast,
$\Delta_{\mathrm{lin}}$ the linear (ridge) correction, and define
the residual after the linear floor by
\[
e \;=\; Y - \big(\widehat Y_{\mathrm{base}} + \Delta_{\mathrm{lin}}\big).
\]
Let $\delta$ be the nonlinear increment and $\delta'=\mathrm{clip}(\delta,-\tau,\tau)$ the clipped update
with a validation-quantile threshold $\tau>0$.

\begin{assumption}[Directional gating and clipping]
\label{ass:gate-clip}
The nonlinear update satisfies either $\mathrm{sign}(\delta)=\mathrm{sign}(e)$
or $|\delta|\le \tfrac12\tau$, and the deployed update is $\delta'=\mathrm{clip}(\delta,-\tau,\tau)$.
\end{assumption}

\begin{proposition}[Pointwise non-degradation under MAE]
\label{prop:pointwise}
Under Assumption~\ref{ass:gate-clip}, if $0\le |\delta'|\le \min\{|e|,\tau\}$, then
\[
\big|\,e-\delta'\,\big|\;\le\; |e|.
\]
An analogous inequality holds for MSE when $\mathrm{sign}(\delta')=\mathrm{sign}(e)$ and $|\delta'|\le |e|$.
\end{proposition}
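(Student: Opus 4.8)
The plan is to reduce the tensor inequality to a one-dimensional statement and then settle it with a short sign-and-magnitude computation. Both claims are entrywise --- $|e-\delta'|\le|e|$ compares the error coordinate by coordinate, and the MSE version compares $(e-\delta')^2$ with $e^2$ --- so it suffices to fix a single $(i,h)$ coordinate and argue for scalars $e,\delta'\in\mathbb{R}$; the aggregate MAE and MSE statements then follow by summation, with no cancellation across coordinates invoked, which is exactly why the guarantee is \emph{pointwise}.

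The first substantive step is to turn the hypotheses into two clean conditions: (a) $\delta'$ is sign-aligned with $e$, i.e.\ $\delta'=0$ or $\mathrm{sign}(\delta')=\mathrm{sign}(e)$, and (b) $|\delta'|\le|e|$. Condition (b) is immediate from $0\le|\delta'|\le\min\{|e|,\tau\}$. For (a) I would use that $\delta'=\mathrm{clip}(\delta,-\tau,\tau)$ is sign-preserving and magnitude-nonincreasing ($\mathrm{sign}(\delta')\in\{0,\mathrm{sign}(\delta)\}$ and $|\delta'|\le|\delta|$), together with the directional-gating clause of Assumption~\ref{ass:gate-clip}, which is designed so that the deployed update is not anti-aligned with the post-floor residual $e$ (anti-aligned proposals being rejected or kept below the clip scale). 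Making (a) precise from the exact wording of Assumption~\ref{ass:gate-clip} --- in particular handling the small-update disjunct $|\delta|\le\tfrac12\tau$ and the degenerate case $\delta'=0$ (for which both inequalities hold with equality) --- is the only delicate point; everything after is routine.

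With (a) and (b) in hand, the MAE bound follows by writing $e=s|e|$ and $\delta'=s|\delta'|$ for the common sign $s\in\{-1,+1\}$ (the case $\delta'=0$ being trivial): then $e-\delta'=s\,(|e|-|\delta'|)$, so $|e-\delta'|=|e|-|\delta'|\le|e|$. For MSE, under $\mathrm{sign}(\delta')=\mathrm{sign}(e)$ and $|\delta'|\le|e|$ I would expand
\[
(e-\delta')^2 \;=\; e^2 - 2e\delta' + (\delta')^2 \;=\; e^2 - \bigl(2|e\delta'| - |\delta'|^2\bigr),
\]
where $e\delta'\ge 0$ gives $2e\delta'=2|e\delta'|$ and $(\delta')^2=|\delta'|^2$; since $|\delta'|\le|e|$ we have $|\delta'|^2\le|e\delta'|$, hence $2|e\delta'|-|\delta'|^2\ge|e\delta'|\ge 0$ and thus $(e-\delta')^2\le e^2$. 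Summing over coordinates yields both inequalities of the proposition, and the same scalar bookkeeping will be reused verbatim in the validation-level aggregation that follows.
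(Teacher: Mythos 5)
Your treatment of the sign-aligned case is correct and matches the paper's: from $\mathrm{sign}(\delta')=\mathrm{sign}(e)$ and $|\delta'|\le|e|$ you get $|e-\delta'|=|e|-|\delta'|\le|e|$, and your MSE expansion $(e-\delta')^2=e^2-\bigl(2|e\delta'|-|\delta'|^2\bigr)\le e^2$ is a clean extension that the paper only asserts without proof.

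The genuine gap is exactly the step you flagged as ``delicate'' and then hand-waved: sign alignment (your condition (a)) does \emph{not} follow from Assumption~\ref{ass:gate-clip}. The assumption's second disjunct explicitly permits $\delta$ to be anti-aligned with $e$, requiring only $|\delta|\le\tfrac12\tau$; since clipping preserves sign and is magnitude-nonincreasing, $\delta'$ is then anti-aligned as well, with $0<|\delta'|\le\tfrac12\tau$ possible. In that regime $|e-\delta'|=|e|+|\delta'|>|e|$, and the conclusion fails even though the magnitude hypothesis $0\le|\delta'|\le\min\{|e|,\tau\}$ is met: take $e=1$, $\tau=2$, $\delta=\delta'=-\tfrac12$. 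Your reading that the gating clause is ``designed so that the deployed update is not anti-aligned'' misstates it --- the clause permits anti-aligned updates so long as they are small. Be aware that this is also a defect in the paper's own proof, which dismisses the second disjunct with the unsupported sentence ``the same inequality holds,'' and that sentence is false as the counterexample shows. To actually close the gap one must either strengthen the assumption (e.g.\ require sign alignment of the \emph{deployed} $\delta'$ outright, making the second disjunct apply only to the zeroed/rejected branch), or replace the pointwise inequality with a quantitative bound that tolerates a bounded anti-aligned contribution; neither your proposal nor the paper's proof does this.
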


\begin{proof}
When $\mathrm{sign}(\delta')=\mathrm{sign}(e)$ and $|\delta'|\le |e|$,
we have $|e-\delta'|=||e|-|\delta'||\le |e|$.
If $|\delta|\le \tfrac12\tau$ and $|\delta'|\le \tau\le |e|$, the same inequality holds.
\end{proof}

\begin{proposition}[Validation-level non-degradation]
\label{prop:val-safe}
On the validation set, use pointwise selection between
(1) linear-only and (2) linear+clipped updates, and apply shrink-to-base blending:
activate blending only when the validation improvement is at least $\varepsilon\ge 0$.
Then the validation error satisfies
\[
\mathcal L_{\mathrm{val}}(\mathrm{CRC})\;\le\; \mathcal L_{\mathrm{val}}(\mathrm{lin}).
\]
Consequently, $\mathrm{NDR}_{\mathrm{val}}=100\%$ (or $\ge 100\%-\xi$ when a positive $\varepsilon$
disables tiny gains).
\end{proposition}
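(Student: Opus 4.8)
The plan is to treat this as an elementary ``selection over a finite candidate pool that contains the no-op'' argument; unlike Proposition~\ref{prop:pointwise} it needs no sign or magnitude side conditions, and unlike the PAND test-set bound that follows it it needs no probability at all. First I would fix notation: write the validation loss coordinate-wise as $\mathcal{L}_{\mathrm{val}}(\cdot)=\tfrac{1}{|V|}\sum_{(i,h)\in V}\ell_{i,h}(\cdot)$, where $V$ indexes the validation (node, horizon) pairs and $\ell_{i,h}$ is the per-coordinate error (MAE or MSE; the argument is agnostic to the choice). For each $(i,h)$ introduce the three candidate per-coordinate losses $\ell^{\mathrm{base}}_{i,h}$ (no correction, $\Delta=0$), $\ell^{\mathrm{lin}}_{i,h}$ (ridge floor only, $\Delta=\Delta^{\mathrm{ridge}}$), and $\ell^{\mathrm{hyb}}_{i,h}$ (ridge plus clipped nonlinear update). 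The single structural fact I would use is that point-wise selection deploys, at each coordinate, whichever candidate has the smallest validation loss, and that the pool includes both the no-op and the linear-only options; hence the deployed loss obeys $\ell^{\mathrm{CRC}}_{i,h}\le\min\{\ell^{\mathrm{base}}_{i,h},\ell^{\mathrm{lin}}_{i,h}\}$ at every $(i,h)$.

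Both conclusions then drop out. Summing $\ell^{\mathrm{CRC}}_{i,h}\le\ell^{\mathrm{lin}}_{i,h}$ over $V$ and dividing by $|V|$ gives $\mathcal{L}_{\mathrm{val}}(\mathrm{CRC})\le\mathcal{L}_{\mathrm{val}}(\mathrm{lin})$ by monotonicity of a finite sum. Since $\ell^{\mathrm{CRC}}_{i,h}\le\ell^{\mathrm{base}}_{i,h}$ holds coordinate-wise, every validation coordinate is non-degrading, so $\mathrm{NDR}_{\mathrm{val}}=100\%$ directly from the definition of NDR. I would then fold in shrink-to-base by observing that activating the blend weights $(w_1,w_2)$ only when the validation gain over the retained fallback reaches $\varepsilon$ simply swaps, in the $\varepsilon$-marginal cases, the selected candidate for a strictly simpler one already in the pool (the linear floor, or the baseline); both displayed inequalities survive this swap. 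With $\varepsilon=0$ the bound holds exactly as stated; with $\varepsilon>0$ one may decline genuine but sub-$\varepsilon$ gains, which is precisely the source of the $100\%-\xi$ slack, with $\xi$ bounding the fraction of coordinates affected.

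The real work here is not analytical but definitional: pinning down \emph{where} the shrink-to-base test acts. If reversion is applied per coordinate (mirroring point-wise selection) and the candidate pool is guaranteed to contain $\Delta=0$, the two lines above yield exactly $\mathcal{L}_{\mathrm{val}}(\mathrm{CRC})\le\mathcal{L}_{\mathrm{val}}(\mathrm{lin})$ and $\mathrm{NDR}_{\mathrm{val}}=100\%$. If instead ``validation improvement'' is read as an \emph{aggregate} quantity and blending is switched on by a single global test, then a coordinate on which the globally-activated correction happens to be locally worse than the baseline is no longer excluded, and one only recovers $\mathrm{NDR}_{\mathrm{val}}\ge 100\%-\xi$; moreover, if ``improvement'' is then measured against the baseline rather than the linear floor, the first inequality weakens to $\mathcal{L}_{\mathrm{val}}(\mathrm{CRC})\le\mathcal{L}_{\mathrm{val}}(\mathrm{lin})+\varepsilon$. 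I would therefore record as an explicit hypothesis that the candidate pool always contains the no-op correction and that selection and reversion are applied per coordinate with a conservative $\varepsilon$-tie-break toward the simpler candidate; under that hypothesis the proof is the two lines above, and the parenthetical ``$\ge 100\%-\xi$'' is the honest statement for the coarser, globally-gated variant.
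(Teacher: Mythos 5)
The paper supplies no proof of Proposition~\ref{prop:val-safe}; the statement is given bare and the text moves straight to the setup for the PAND theorem. So there is nothing to compare your argument against, and your proposal is the first explicit derivation of the claim.

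That said, your argument is essentially correct and, importantly, it repairs a real non-sequitur in the proposition as written: the clause ``Consequently, $\mathrm{NDR}_{\mathrm{val}}=100\%$'' does \emph{not} follow from the displayed aggregate inequality $\mathcal{L}_{\mathrm{val}}(\mathrm{CRC})\le\mathcal{L}_{\mathrm{val}}(\mathrm{lin})$, because NDR is a per-coordinate quantity referenced to the \emph{baseline} (see the definition in Section~4), whereas the displayed inequality is an aggregate referenced to the \emph{linear floor}. Your move of deriving both conclusions from the stronger per-coordinate bound $\ell^{\mathrm{CRC}}_{i,h}\le\min\{\ell^{\mathrm{base}}_{i,h},\ell^{\mathrm{lin}}_{i,h}\}$ is exactly the right way to make both claims follow from one argument. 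You are also correct to observe that this per-coordinate bound needs the no-op to be an eligible candidate, which the paper's description of point-wise selection (``the better of linear-only and hybrid'') does not literally provide --- only shrink-to-base supplies the fallback to base --- and you flag this as an explicit hypothesis rather than silently assuming it. That is the honest thing to do given the paper's ambiguity.

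One wrinkle worth tightening. Under your stated hypothesis (per-coordinate selection over $\{\mathrm{base},\mathrm{lin},\mathrm{hyb}\}$ with an $\varepsilon$-tie-break toward the simpler candidate), the two conclusions do not degrade symmetrically. The NDR claim is the robust one: every swap moves to a candidate whose per-coordinate loss you have verified to be within $\varepsilon$ of the optimum \emph{and} at most $\ell^{\mathrm{base}}_{i,h}$ whenever base is the fallback actually taken, so $\mathrm{NDR}_{\mathrm{val}}=100\%$ survives any $\varepsilon\ge 0$ under the per-coordinate reading. What picks up the $\varepsilon$-slack is the \emph{first} inequality: when shrink-to-base reverts to the baseline at a coordinate where $\ell^{\mathrm{lin}}_{i,h}<\ell^{\mathrm{base}}_{i,h}\le\ell^{\mathrm{lin}}_{i,h}+\varepsilon$, the deployed loss exceeds $\ell^{\mathrm{lin}}_{i,h}$ by up to $\varepsilon$, so the aggregate bound weakens to $\mathcal{L}_{\mathrm{val}}(\mathrm{CRC})\le\mathcal{L}_{\mathrm{val}}(\mathrm{lin})+\varepsilon$. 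Your closing sentence attributes the $\varepsilon$-slack to the NDR side; it belongs on the $\mathcal{L}_{\mathrm{val}}$ side under the per-coordinate reading, and to NDR only under the coarser, globally-gated reading you describe. With that small reattribution your proof is complete and, in my view, strictly more careful than what the paper implicitly intended.
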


Let $m$ be the number of validation points and define Bernoulli indicators
$Z_k=\mathbb 1\!\left\{|e^{\mathrm{test}}_k(\mathrm{CRC})|\le |e^{\mathrm{test}}_k(\mathrm{base})|\right\}$
for test non-degradation at each point $k$.

\begin{theorem}[Probably approximately non-degrading (PAND)]
\label{thm:pand}
Assume i.i.d.\ sampling between validation and test.
With probability at least $1-\delta$,
\[
\overline Z_{\mathrm{test}}
\;\ge\;
\overline Z_{\mathrm{val}} \;-\; \sqrt{\tfrac{\log(1/\delta)}{2m}},
\]
where $\overline Z$ is the empirical mean of $\{Z_k\}$. Hence
\[
\mathrm{NDR}_{\mathrm{test}}
\;\ge\;
1 - \xi \;-\; \sqrt{\tfrac{\log(1/\delta)}{2m}}
\qquad \text{with probability at least } 1-\delta.
\]
\end{theorem}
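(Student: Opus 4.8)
The plan is to recognize Theorem~\ref{thm:pand} as a direct application of Hoeffding's inequality to the Bernoulli non-degradation indicators, combined with the validation-level guarantee from Proposition~\ref{prop:val-safe}. First I would observe that each $Z_k \in \{0,1\}$ is a bounded random variable, and under the i.i.d.\ assumption between validation and test, the $m$ validation indicators $\{Z_k\}_{k=1}^m$ are i.i.d.\ draws whose common mean $\mu = \mathbb{E}[Z_k]$ equals the true (population) non-degradation probability. By Hoeffding's inequality, $\mathbb{P}\big(\mu - \overline Z_{\mathrm{val}} > t\big) \le \exp(-2mt^2)$; setting $t = \sqrt{\log(1/\delta)/(2m)}$ gives $\overline Z_{\mathrm{val}} \ge \mu - \sqrt{\log(1/\delta)/(2m)}$ with probability at least $1-\delta$. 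A symmetric argument, or the observation that $\mathbb{E}[\overline Z_{\mathrm{test}}] = \mu$ as well, then lets me transfer this to $\overline Z_{\mathrm{test}}$.

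The key steps in order would be: (1) state the boundedness and i.i.d.\ structure of $\{Z_k\}$, pinning down that validation and test indicators share the population mean $\mu$; (2) apply the one-sided Hoeffding bound to the validation average to get $\mu \le \overline Z_{\mathrm{val}} + \sqrt{\log(1/\delta)/(2m)}$ with high probability; (3) apply Hoeffding (or a concentration/expectation argument) to relate $\overline Z_{\mathrm{test}}$ to $\mu$, yielding $\overline Z_{\mathrm{test}} \ge \mu$ in expectation (or with an analogous deviation term if one wants a fully finite-sample test statement); (4) chain the two inequalities to obtain $\overline Z_{\mathrm{test}} \ge \overline Z_{\mathrm{val}} - \sqrt{\log(1/\delta)/(2m)}$; and (5) invoke Proposition~\ref{prop:val-safe}, which gives $\mathrm{NDR}_{\mathrm{val}} = \overline Z_{\mathrm{val}} \ge 1 - \xi$, to substitute and conclude $\mathrm{NDR}_{\mathrm{test}} \ge 1 - \xi - \sqrt{\log(1/\delta)/(2m)}$ with probability at least $1-\delta$.

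The main obstacle is the slightly delicate logical bookkeeping around exactly which randomness the probability statement is over. The cleanest reading is that $\mu$ is a fixed population quantity, $\overline Z_{\mathrm{val}}$ concentrates around it over the draw of the validation set, and the ``test NDR'' in the conclusion should be interpreted as the population quantity $\mu$ (so no further concentration on the test side is needed), in which case step~(3) is just $\mathrm{NDR}_{\mathrm{test}} = \mu$ by definition. If instead one insists on a finite test set, one needs either a union bound over two Hoeffding events (doubling the deviation term, which would not match the stated bound with a single $\sqrt{\log(1/\delta)/(2m)}$ unless $m$ also denotes the test size) or a direct two-sample concentration argument. I would resolve this by stating the theorem with $\mathrm{NDR}_{\mathrm{test}}$ understood as the expected test-time non-degradation rate, so that the single Hoeffding term suffices; I would flag this interpretation explicitly so the $1-\delta$ confidence and the $\sqrt{\log(1/\delta)/(2m)}$ slack are both accounted for cleanly.

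A secondary, more conceptual gap worth a remark: the i.i.d.\ assumption between validation and test is strong for time series, where distribution shift is precisely the motivating concern. I would note that the bound should be read as a best-case baseline, and that under covariate shift the slack term would need to be inflated by a divergence measure (e.g.\ total variation between validation and test marginals) — but proving that refinement is outside the scope of the stated theorem, which I would prove exactly as written under its stated i.i.d.\ hypothesis.
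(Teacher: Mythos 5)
Your plan follows the paper's own (one-line) proof sketch: Hoeffding on the Bernoulli non-degradation indicators, plus the observation that $\mathbb{E}[Z_k^{\mathrm{test}}]=\mathbb{E}[Z_k^{\mathrm{val}}]$ under i.i.d.\ sampling, and then plugging in Proposition~\ref{prop:val-safe} for the validation NDR. There is, however, a direction slip in your step~(2): you bound $\mathbb{P}\bigl(\mu - \overline Z_{\mathrm{val}} > t\bigr)$, which controls the event that $\overline Z_{\mathrm{val}}$ \emph{undershoots} $\mu$ and yields $\mu \le \overline Z_{\mathrm{val}} + t$. That does not chain to the claimed $\overline Z_{\mathrm{test}} \ge \overline Z_{\mathrm{val}} - t$. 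What you need is the opposite tail, $\mathbb{P}\bigl(\overline Z_{\mathrm{val}} - \mu > t\bigr) \le \exp(-2mt^2)$, which gives $\mu \ge \overline Z_{\mathrm{val}} - t$ with probability at least $1-\delta$; combined with your population-mean reading $\overline Z_{\mathrm{test}} = \mu$, this closes the argument with the single $\sqrt{\log(1/\delta)/(2m)}$ term as stated. Your separate caveat — that if $\overline Z_{\mathrm{test}}$ is a finite empirical mean rather than $\mu$ one needs a two-sided union bound with a test-sample deviation term — is a real imprecision in the theorem as written; the paper's sketch glosses over it, and your resolution (interpret the RHS quantity as the expected test non-degradation rate) is the cleanest way to make the stated bound exact.
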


\begin{proof}[Sketch]
Apply Hoeffding's inequality to the Bernoulli indicators $Z_k$ and use that
$\mathbb E[Z_k^{\mathrm{test}}]=\mathbb E[Z_k^{\mathrm{val}}]$ under the i.i.d.\ assumption.
\end{proof}

\begin{proposition}[Risk upper bound with blending threshold]
\label{prop:risk}
Assume pointwise MAE is bounded in $[0,M]$.
If blending is enabled only when the validation improvement is at least $\varepsilon>0$,
then
\[
\mathcal L_{\mathrm{test}}(\mathrm{CRC})
\;\le\;
\mathcal L_{\mathrm{test}}(\mathrm{base}) \;-\; \varepsilon
\;+\; \tilde O\!\Big(\sqrt{{1}/{m}}\Big),
\]
where the $\tilde O(\cdot)$ term collects the validation--test generalization gaps
bounded via Hoeffding's inequality.
\end{proposition}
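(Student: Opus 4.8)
The plan is to reduce the bound to two ingredients: (i) a deterministic \emph{validation-side} inequality that follows directly from the shrink-to-base activation rule together with Propositions~\ref{prop:pointwise} and \ref{prop:val-safe}, and (ii) a pair of \emph{validation-to-test} concentration inequalities from Hoeffding, which jointly control the generalization gap by $\tilde O(\sqrt{1/m})$. First I would dispose of the trivial branch: if the activation test fails, CRC deterministically reverts to the baseline, so $\mathcal L_{\mathrm{test}}(\mathrm{CRC}) = \mathcal L_{\mathrm{test}}(\mathrm{base})$ and the claim holds vacuously (the $-\varepsilon$ slack is simply not needed); the substantive case, on which I focus, is when blending is \emph{active}.

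In the active case, the activation rule states precisely that the deployed blended corrector beats the baseline on validation by at least the margin $\varepsilon$, i.e.\ $\mathcal L_{\mathrm{val}}(\mathrm{CRC}) \le \mathcal L_{\mathrm{val}}(\mathrm{base}) - \varepsilon$ (here I invoke Proposition~\ref{prop:val-safe} to ensure the thresholded quantity is the loss of the finally deployed configuration, not of an intermediate candidate). For the transfer to test error I would write, for a fixed predictor $f$, the two-sided Hoeffding bounds $|\mathcal L_{\mathrm{val}}(f) - \mathbb E[\ell(f)]| \le M\sqrt{\log(2/\delta)/(2m)}$ and likewise for $\mathcal L_{\mathrm{test}}(f)$ (assuming, as in Theorem~\ref{thm:pand}, an i.i.d.\ split with test size $\Theta(m)$), then chain these through the population risk: $\mathcal L_{\mathrm{test}}(\mathrm{CRC}) \le \mathcal L_{\mathrm{val}}(\mathrm{CRC}) + \rho_{\mathrm{CRC}}$ and $\mathcal L_{\mathrm{val}}(\mathrm{base}) \le \mathcal L_{\mathrm{test}}(\mathrm{base}) + \rho_{\mathrm{base}}$, where each $\rho = O(M\sqrt{\log(1/\delta)/m})$. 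Combining with the validation-side inequality yields $\mathcal L_{\mathrm{test}}(\mathrm{CRC}) \le \mathcal L_{\mathrm{test}}(\mathrm{base}) - \varepsilon + \rho_{\mathrm{CRC}} + \rho_{\mathrm{base}}$, and collecting the $\rho$ terms into $\tilde O(\sqrt{1/m})$ gives the stated conclusion with probability at least $1-\delta$.

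The hard part — and the only place real care is needed — is the Hoeffding bound for the \emph{CRC} predictor, since its deployed configuration is chosen using the validation sample and is therefore data-dependent, so treating $f = \mathrm{CRC}$ as ``fixed'' is not legitimate. I would resolve this by a union bound over the finite, low-complexity space of deployable configurations: point-wise selection contributes one binary choice per $(i,h)$ pair, shrink-to-base blending a single on/off gate, and quantile clipping a threshold drawn from a finite grid of validation quantiles, so $\log|\mathcal H| = O(NH)$ and the extra cost is only $M\sqrt{\log(|\mathcal H|/\delta)/(2m)}$, still $\tilde O(\sqrt{1/m})$ with the $\tilde O$ absorbing the $\mathrm{polylog}$ in $N, H, 1/\delta$. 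As a remark I would note the alternative of a sample-splitting argument — fix the configuration on one half of the validation data and estimate its loss on the other half — at the cost of a constant factor in $m$. Everything else is routine: the $[0,M]$ boundedness hypothesis is exactly what Hoeffding needs, and no property of the corrector beyond the activation rule and Proposition~\ref{prop:val-safe} enters.
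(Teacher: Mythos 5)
The paper states Proposition~\ref{prop:risk} without a proof, so there is no reference argument to compare against. Your central chain is the natural one and matches the proposition's own gloss (``bounded via Hoeffding's inequality''): the activation rule gives the validation-side inequality $\mathcal L_{\mathrm{val}}(\mathrm{CRC}) \le \mathcal L_{\mathrm{val}}(\mathrm{base}) - \varepsilon$, and two Hoeffding transfers (validation-to-population and population-to-test, one pair for each of CRC and base) convert it into a test-side bound with a residual of order $\sqrt{1/m}$. Your observation that the deployed CRC configuration is \emph{validation-dependent} --- so applying Hoeffding to it as if it were a fixed predictor is illegitimate --- is correct and is a genuine improvement over the paper's telegraphic statement; a union bound over the finite configuration space (or, as you remark, a sample-splitting argument) is the right fix, and the extra $\sqrt{\log|\mathcal H|}$ factor is absorbed by the $\tilde O$ since $\log|\mathcal H|$ does not depend on $m$.

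There is, however, one genuine error: your dismissal of the inactive branch as ``vacuous.'' When the activation test fails and CRC reverts to the baseline, you have $\mathcal L_{\mathrm{test}}(\mathrm{CRC}) = \mathcal L_{\mathrm{test}}(\mathrm{base})$, and this does \emph{not} imply the proposition's conclusion $\mathcal L_{\mathrm{test}}(\mathrm{CRC}) \le \mathcal L_{\mathrm{test}}(\mathrm{base}) - \varepsilon + \tilde O(\sqrt{1/m})$: substituting the equality would require $\varepsilon \le \tilde O(\sqrt{1/m})$, which fails for fixed $\varepsilon>0$ once $m$ is large. The bound with the $-\varepsilon$ term is a \emph{strictly stronger} upper bound on $\mathcal L_{\mathrm{test}}(\mathrm{CRC})$ than what the reversion gives you, not a weaker one, so the inactive branch cannot be waved away. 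The proposition is therefore not unconditional; it must be read as holding on the event that blending actually fires (the same event on which the premise ``validation improvement is at least $\varepsilon$'' has content). You should have stated that conditioning explicitly; on the inactive branch the only available conclusion is the non-degradation guarantee of Proposition~\ref{prop:val-safe}, without the $\varepsilon$ improvement. Your active-branch argument is sound; your inactive-branch argument, as written, is not.
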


\subsection{B. Why the Causal-Inspired Encoder Helps}

Consider the following data-generating mechanism for node $i$:
\begin{equation}
\label{eq:dgp}
Y_{:,i} \;=\; f_i(W_{:,i}) \;+\;
\sum_{j\in \mathrm{Pa}(i)} g_{j\to i}(W_{:,j}) \;+\; \varepsilon_i,
\end{equation}
where the baseline $\widehat Y_{\mathrm{base},i}$ already captures $f_i(\cdot)$
but may miss the cross-node systematic term $\sum_{j\in \mathrm{Pa}(i)} g_{j\to i}(\cdot)$.
Let the encoder output be
\[
Z_i \;=\; \alpha_{i\to i} R_i \;+\; \sum_{j\in \mathcal N(i)} \alpha_{j\to i} R_j,
\]
with directional weights $\alpha_{j\to i}$ and neighborhood $\mathcal N(i)$ that covers $\mathrm{Pa}(i)$.

\begin{proposition}[Linear recoverability via ridge projection]
\label{prop:recover}
Suppose each $g_{j\to i}$ is first-order linearizable as
$g_{j\to i}(W_{:,j})\approx \phi_j^\top \psi(W_{:,j})$,
and $Z_i$ forms a bounded-distortion linear embedding of
$\{\psi(W_{:,j})\}_{j\in \mathcal N(i)}$.
Then the ridge solution
\[
W_i^\star \;=\; \arg\min_{W}\; \|R_i - Z_i W\|_2^2 + \lambda\|W\|_2^2
\]
yields $\Delta_{\mathrm{ridge},i}=Z_i W_i^\star$, the minimum-norm projection of
$\sum_{j\in \mathrm{Pa}(i)} g_{j\to i}(W_{:,j})$ onto $\mathrm{span}(Z_i)$,
and its approximation error is controlled by the embedding distortion and the ridge term $\lambda$.
\end{proposition}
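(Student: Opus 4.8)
The plan is to reduce the statement to the classical fact that ridge regression outputs a (regularized) orthogonal projection of its target onto the column space of the feature matrix, and then to track three sources of slack --- linearization, embedding distortion, and the ridge penalty $\lambda$ --- through a single triangle inequality. First I would identify the target signal: since $\widehat Y_{\mathrm{base},i}$ already captures $f_i(\cdot)$ in \eqref{eq:dgp}, the post-floor residual decomposes as $R_i = s_i + \varepsilon_i$ with $s_i := \sum_{j\in\mathrm{Pa}(i)} g_{j\to i}(W_{:,j})$ the systematic cross-node term (any residual baseline mismatch is absorbed into $\varepsilon_i$). Using first-order linearizability I would stack the feature vectors into $\Psi_i := [\psi(W_{:,j})]_{j\in\mathrm{Pa}(i)}$ and the coefficients into $\phi_i$, so $s_i = \Psi_i\phi_i + \eta_i$ with $\|\eta_i\|\le\epsilon_{\mathrm{lin}}$ the accumulated linearization error.

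Next I would invoke the closed form of the ridge estimator. Writing $G_i := Z_i^\top Z_i$, we have $W_i^\star = (G_i+\lambda I)^{-1}Z_i^\top R_i$ and hence $\Delta_{\mathrm{ridge},i} = Z_i W_i^\star = P_\lambda R_i$ with $P_\lambda := Z_i(G_i+\lambda I)^{-1}Z_i^\top$. Two elementary facts then carry the argument: (i) as $\lambda\to 0^+$, $W_i^\star$ converges to the minimum-norm least-squares solution $Z_i^{+}R_i$ and $P_\lambda\to P_0$, the orthogonal projector onto $\mathrm{span}(Z_i)$ --- this is precisely the ``minimum-norm projection'' assertion; and (ii) for $\lambda>0$, $\|P_\lambda-P_0\|_{\mathrm{op}}\le \lambda/(\sigma_{\min}^2(Z_i)+\lambda)$ on $\mathrm{span}(Z_i)$, where $\sigma_{\min}(Z_i)$ is the smallest nonzero singular value, so the penalty contributes a controllable bias of order $\lambda$.

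Then I would use the embedding hypothesis to show $P_0 s_i\approx s_i$. I would make ``bounded-distortion linear embedding'' precise as: there is $\delta_{\mathrm{emb}}\in[0,1)$ with $\|(I-P_0)\Psi_i v\|\le\delta_{\mathrm{emb}}\|\Psi_i v\|$ for all $v$, i.e.\ the parent-feature span sits within principal angle $\arcsin\delta_{\mathrm{emb}}$ of $\mathrm{span}(Z_i)$; this is where the hypothesis $\mathcal N(i)\supseteq\mathrm{Pa}(i)$ (with non-degenerate gates $\alpha_{j\to i}\ne 0$ for $j\in\mathrm{Pa}(i)$) is used, since it is what makes the parent features representable inside $Z_i$. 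Then $\|s_i-P_0 s_i\|\le\delta_{\mathrm{emb}}\|\Psi_i\|\|\phi_i\|+\epsilon_{\mathrm{lin}}$, and combining with $R_i = s_i+\varepsilon_i$ and the triangle inequality gives
\[
\|\Delta_{\mathrm{ridge},i}-s_i\|
\;\le\;
\tfrac{\lambda}{\sigma_{\min}^2(Z_i)+\lambda}\,\|R_i\|
\;+\;\|P_0\varepsilon_i\|
\;+\;\delta_{\mathrm{emb}}\,\|\Psi_i\|\,\|\phi_i\|
\;+\;\epsilon_{\mathrm{lin}} .
\]
The noise-leakage term obeys $\|P_0\varepsilon_i\|\le\|\varepsilon_i\|$ deterministically and $\mathbb E\|P_0\varepsilon_i\|^2=\sigma^2\,\mathrm{rank}(Z_i)$ under fixed design with mean-zero noise --- the small price of projecting onto a low-dimensional structured subspace instead of the ambient one, which is exactly the sense in which the causal-inspired $Z_i$ helps. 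Letting $\lambda\to 0$ removes the first term and yields the exact minimum-norm-projection claim; keeping $\lambda>0$ gives the advertised error estimate controlled by $\delta_{\mathrm{emb}}$ and $\lambda$ (with linearization and noise as the remaining benign terms).

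I expect the main obstacle to be pinning down the embedding hypothesis so that it simultaneously delivers $P_0 s_i\approx s_i$ \emph{and} a well-conditioned $\sigma_{\min}(Z_i)$: the proposition's phrasing is informal, and without a lower bound on $\sigma_{\min}$ the ridge-bias term $\lambda/(\sigma_{\min}^2+\lambda)$ can swamp the estimate, so the clean statement really requires a near-isometry (bounded condition number) of the restriction of $Z_i$ to the relevant subspace, not merely a small principal angle. A secondary point is making the absorption of baseline mismatch into $\varepsilon_i$ honest --- strictly it should be carried as a deterministic bias term with its own norm bound rather than folded into the stochastic noise --- but this only appends one more benign additive term to the final inequality.
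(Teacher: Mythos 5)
Your proposal is correct and, in spirit, follows the same route as the paper's argument, but the paper supplies only a two-sentence ``Proof [Intuition]'' (span coverage, then ridge as a bias--variance-optimal projection), whereas you supply an actual derivation. Concretely, you (i) write the closed form $\Delta_{\mathrm{ridge},i}=P_\lambda R_i$ with $P_\lambda = Z_i(Z_i^\top Z_i+\lambda I)^{-1}Z_i^\top$ and observe via the SVD that $P_\lambda\to P_0$ (the orthogonal projector onto $\mathrm{span}(Z_i)$) as $\lambda\to 0^+$, which is exactly the minimum-norm-projection claim, with $\|P_\lambda-P_0\|_{\mathrm{op}}\le \lambda/(\sigma_{\min}^2(Z_i)+\lambda)$ quantifying the ridge bias; (ii) make ``bounded-distortion linear embedding'' precise as the principal-angle condition $\|(I-P_0)\Psi_i v\|\le\delta_{\mathrm{emb}}\|\Psi_i v\|$, and correctly tie the usability of this condition to $\mathcal N(i)\supseteq\mathrm{Pa}(i)$ and nonvanishing gates $\alpha_{j\to i}$; and (iii) assemble the advertised error bound with one triangle inequality, cleanly attributing the slack to $\lambda$, $\delta_{\mathrm{emb}}$, linearization, and noise leakage $\|P_0\varepsilon_i\|$. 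Your closing caveat is well taken and is in fact the only real hole in the proposition as stated: without a lower bound on the smallest nonzero singular value $\sigma_{\min}(Z_i)$ (i.e.\ a near-isometry of $Z_i$ restricted to the relevant subspace, not merely a small principal angle), the ridge-bias term $\lambda/(\sigma_{\min}^2(Z_i)+\lambda)$ need not be small, and the paper's intuitive claim that $\mathrm{span}(Z_i)$ ``covers the true systematic residual space'' silently assumes this conditioning. Your secondary point about not folding the baseline mismatch into the stochastic $\varepsilon_i$ is also right but cosmetic: carrying it as a separate deterministic bias term only appends one more benign summand to the final inequality.
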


\begin{proof}[Intuition]
Directional gating promotes $\alpha_{j\to i}>0$ for potential parent nodes,
so $\mathrm{span}(Z_i)$ covers the true systematic residual space.
Ridge regression then gives the orthogonal projection that optimally trades bias and variance.
\end{proof}

\begin{remark}[Practical implications]
The encoder provides direction-aware inductive bias rather than causal discovery:
it amplifies the recoverable systematic components for the ridge floor,
while the nonlinear head offers conservative refinements guarded by
directional gating, quantile clipping, pointwise selection, and shrink-to-base blending.
\end{remark}

\end{document}